\newcommand{\sm}{\mbox{softmax}}
\newcommand{\hdist}{d_\mathbb{H}}
\newcommand{\php}{Poincar\'e half-space}
\newcommand{\lca}{\mbox{LCA}}
\newcommand{\Exp}{\mbox{Exp}}
\newcommand{\sigmoid}{\mbox{sigmoid}}
\newcommand{\softplus}{\mbox{softplus}}
\newcommand{\ark}{\mbox{rk}_{\alpha}}
\newtheorem{theorem}{Theorem}
\newtheorem{definition}{Definition}
\DeclareMathOperator{\arcosh}{arcosh}
\DeclareMathOperator*{\argmax}{arg\,max}
\DeclareMathOperator*{\argmin}{arg\,min}
\title{Coneheads: Hierarchy Aware Attention}
\author{
  Albert Tseng \\
  Cornell University\\
  \texttt{albert@cs.cornell.edu} \\
  \And
  Tao Yu \\
  Cornell University \\
  \texttt{tyu@cs.cornell.edu} \\
  \AND
  Toni J.B. Liu \\
  Cornell University \\
  \texttt{jl3499@cornell.edu} \\
  \And
  Christopher De Sa \\
  Cornell University\\
  \texttt{cdesa@cs.cornell.edu} \\
}
\begin{document}

\maketitle

\begin{abstract}
Attention networks such as transformers have achieved state-of-the-art performance in many domains. 
These networks rely heavily on the dot product attention operator, which computes the similarity between two points by taking their inner product.
However, the inner product does not explicitly model the complex structural properties of real world datasets, such as hierarchies between data points.
To remedy this, we introduce cone attention, a drop-in replacement for dot product attention based on hyperbolic entailment cones.
Cone attention associates two points by the depth of their lowest common ancestor in a hierarchy defined by hyperbolic cones, which intuitively measures the divergence of two points and gives a \textit{hierarchy aware} similarity score.
We test cone attention on a wide variety of models and tasks and show that it improves task-level performance over dot product attention and other baselines, and is able to match dot-product attention with significantly fewer parameters.
Our results suggest that cone attention is an effective way to capture hierarchical relationships when calculating attention.
\end{abstract}

\linepenalty=1000
\section{Introduction}

In recent years, attention networks have achieved highly competitive performance in a variety of settings, often outperforming highly-engineered deep neural networks \cite{brown2020language, dosovitskiy2020image, vaswani2017attention}.
The majority of these networks use dot product attention, which defines the similarity between two points $u, v \in \mathbb{R}^d$ by their inner product $u^\top v$ \cite{vaswani2017attention}.
Although dot product attention empirically performs well, it also suffers from drawbacks that limit its ability to scale to and capture complex relationships in large datasets \cite{wang2020linformer, tay2022efficient}.
The most well known of these issues is the quadratic time and memory cost of computing pairwise attention. 
While many works on attention mechanisms have focused on reducing the computational cost of dot product attention, few have considered the properties of the dot product operator itself \cite{wang2020linformer, dao2022flashattention}.

Many real world datasets exhibit complex structural patterns and relationships which may not be well captured by an inner product \cite{Tseng_2022_CVPR,ma2022mega}.
For example, NLP tasks often contain hierarchies over tokens, and images may contain clusters over pixels \cite{yan2022hierarchical, he2017mask}.
Motivated by this, we propose a new framework based on hyperbolic entailment cones to compute attention between sets of points \cite{ganea2018hyperbolic,yu2023shadow}.
Our attention mechanism, which we dub ``cone attention'', utilizes partial orderings defined by hyperbolic cones to better model hierarchical relationships between data points. 
More specifically, we associate two points by the depth of their lowest common ancestor (LCA) in the cone partial ordering, which is analogous to finding their LCA in a latent tree and captures how divergent two points are. 

Cone attention effectively relies on two components: hyperbolic embeddings and entailment cones. 
Hyperbolic embeddings, which use the underlying geometric properties of hyperbolic space, give low-distortion embeddings of hierarchies that are not possible with Euclidean embeddings \cite{sala2018representation}.
Entailment cones, which rely on geometric cones to define partial orders between points, allow us to calculate explicit relationships between points, such as their LCA \cite{ganea2018hyperbolic, yu2023shadow}.
To the best of our knowledge, we are the first to define a hierarchy-aware attention operator with hyperbolic entailment cones.

Functionally, cone attention is a drop-in replacement for dot product attention.
We test cone attention in both ``classical'' attention networks and transformers, and empirically show that cone attention consistently improves end task performance across a variety of NLP, vision, and graph prediction tasks.
Furthermore, we are able to match dot product attention with significantly fewer embedding dimensions, resulting in smaller models.
To summarize, our contributions are:

\begin{itemize}
\item We propose cone attention, a hierarchy aware attention operator that uses the lowest common ancestor of points in the partial ordering defined by hyperbolic entailment cones.
\item We evaluate cone attention on NLP, vision, and graph prediction tasks, and show that it consistently outperforms dot product attention and other baselines. For example, we achieve +1 BLEU and +1\% ImageNet Top-1 Accuracy on the \texttt{transformer\_iwslt\_de\_en} and DeiT-Ti models, respectively.
\item We test cone attention at low embedding dimensions and show that we can significantly reduce model size while maintaining performance relative to dot product attention. With cone attention, we can use 21\% fewer parameters for the IWSLT'14 De-En NMT task.
\end{itemize}

\begin{figure}
\vspace{-0.25in}
\includegraphics[width=\linewidth]{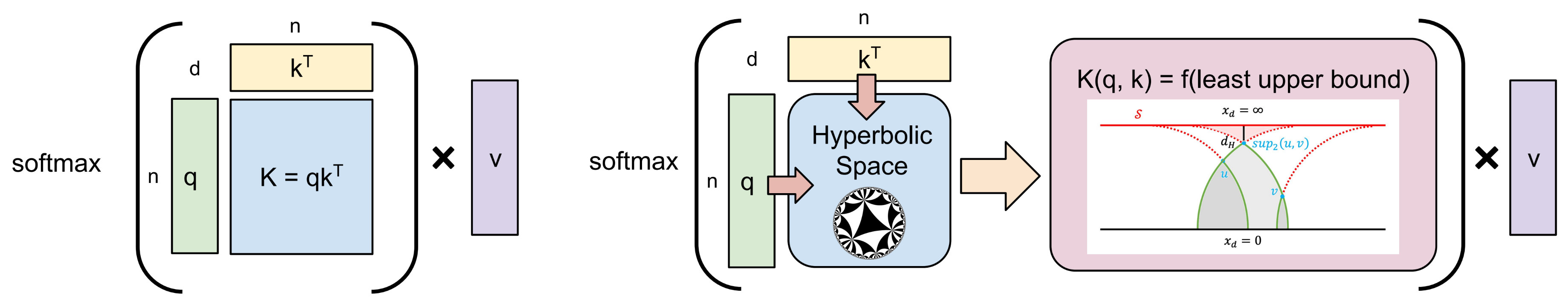}
\vspace{-0.1in}
\caption{Overview of cone attention vs. dot product attention. In dot product attention (left), similarity scores are calculated with $K=qk^\top$. In cone attention (right), $q$ and $k$ are first projected onto hyperbolic space. Then, pairwise similarity is calculated from the lowest common ancestor of points in the partial ordering defined by entailment cones. Cone attention allows us to explicitly encode notions of hierarchy in attention, and empirically gives better performance than dot product attention.}
\label{fig:overview}
\vspace{-0.1in}
\end{figure}

\section{Background}

In this section, we provide background on attention mechanisms, motivate the use of hyperbolic space to embed hierarchies, and describe our choice of entailment cones to encode partial orderings.

\subsection{Attention}
\label{sec:bg_attn}

The attention operator has gained significant popularity as a way to model interactions between sets of tokens \cite{vaswani2017attention}.
At its core, the attention operator $A$ performs a ``lookup'' between a single query $q_i \in \mathbb{R}^d$ and a set of keys $k \in \mathbb{R}^{n\times d}$, and aggregates values $v \in \mathbb{R}^{n\times d}$ associated with the keys to ``read out'' a single value for $q_i$.
Mathematically, this can be represented as

\begin{equation}
A(q_i, k) = C\sum_j \left(K(q_i, k_j) v_j\right)
\end{equation}

where $C\sum_j K(q_i, k_j) = 1$.
In ``traditional'' dot product attention, which has generally superseded ``older'' attention methods such as Additive Attention and Multiplicative Attention \cite{bahdanau2014neural, luong2015effective}, 

\begin{equation}
K(q_i, k_j) = \exp\left(\frac{q_ik_j}{\sqrt{d}}\right) \qquad
C = \frac{1}{\sum_j K(q_i, k_j)} \qquad
A(q_i, k) = \sm\left(\frac{q_ik^\top}{\sqrt{d}}\right) v
\end{equation}

similarity is scored with a combination of cosine similarity and embedding magnitudes.


Existing works have proposed replacing dot product attention to various degrees.
A large body of works focus on efficiently computing dot product attention, such as with Random Fourier Features and low-rank methods \cite{peng2021random, wang2020linformer}.
These methods generally perform worse than dot product attention, as they are approximations \cite{zheng2023efficient}.
Some recent works, such as EVA attention, parameterize these approximations and effectively get a larger class of dot-product-esque attention methods \cite{zheng2023efficient}.
Beyond this, \citet{tsai2019transformer} replace $K$ with compositions of classical kernels.
Others extend dot product attention, such as by controlling the ``width'' of attention with a learned Gaussian distribution or by using an exponential moving average on inputs, although extensions do not usually depend on $K$ \cite{guo2019gaussian, ma2022mega}.
Closer to our work, \citet{gulcehre2018hyperbolic} introduced hyperbolic distance attention, which defines $K(q_i, k_i) = \exp(-\beta \hdist(q_i, k_i) - c)$ where $\hdist$ is the hyperbolic distance, $\beta \in \mathbb{R}^+$, and $c \in \mathbb{R}$.
$K$ can be interpreted as an analog of the distance from $q_i$ to $k_i$ on a latent tree. 
Finally, in an orthogonal direction, \citet{tay2021synthesizer} ignore token-token interactions and synthesize attention maps directly from random alignment matrices. 
Whether token-token interactions are actually needed is outside the scope of this work, and we compare cone attention accordingly.

\subsection{Hyperbolic Space}
\label{sec:hspace}

$d$-dimensional Hyperbolic space, denoted $\mathbb{H}_d$, is a simply connected Riemannian manifold with constant negative sectional curvature \cite{anderson2007}.
This negative curvature results in geometric properties that makes hyperbolic space well-suited for embedding tree-like structures \cite{sala2018representation, yu2019tiling}.
For example, the volume of a hyperbolic ball grows exponentially with respect to its radius; in a tree, the number of leaves grows exponentially with respect to depth.
Furthermore, $\hdist(u, v) \approx \hdist(u, O) + \hdist(O, v)$, where $O$ is the origin, which again mirrors a tree where $d_T(u, v) = d_T(u, \lca(u, v)) + d_T(\lca(u, v), v)$. 
Since hyperbolic space cannot be isometrically embedded into Euclidean space, it is usually represented on a subset of Euclidean space by a ``model'' of $\mathbb{H}_d$ \cite{gulcehre2018hyperbolic}. 
These models are isometric to each other, and the key differences between them lie in their different parameterizations, which allow for cleaner computations and visualizations for certain tasks.

In this work, we primarily use the \php{} model, which is the manifold $\mathsf{H}^d = (\mathcal{U}^d, g_u)$ where $\mathcal{U}^d = \{x \in \mathbb{R}^d: x_d > 0\}$, $g_u(x) = g_e/x_d^2$, and $g_e$ is the Euclidean metric \cite{anderson2007}.
In the \php{}, ``special'' points and curves have particularly nice Euclidean forms.
Ideal points, or points at infinity, are the points where $x_d = 0$ (the ``$x$-axis'') and the single point $x_d = \infty$ at which all lines orthogonal to the $x$-axis converge.
Geodesics, the shortest path between two points, are Euclidean semicircles with the origin on the $x$-axis or vertical rays orthogonal to the $x$-axis.
Horospheres, curves where all normal curves converge at the ideal point, are represented by either an Euclidean ball tangent to the $x$-axis or a horizontal hyperplane when the ideal point is $x_d=\infty$.

\subsection{Entailment Cones}

\begin{figure}
\vspace{-0.3in}
\centering
$\vcenter{\hbox{\includegraphics[width=0.5\textwidth]{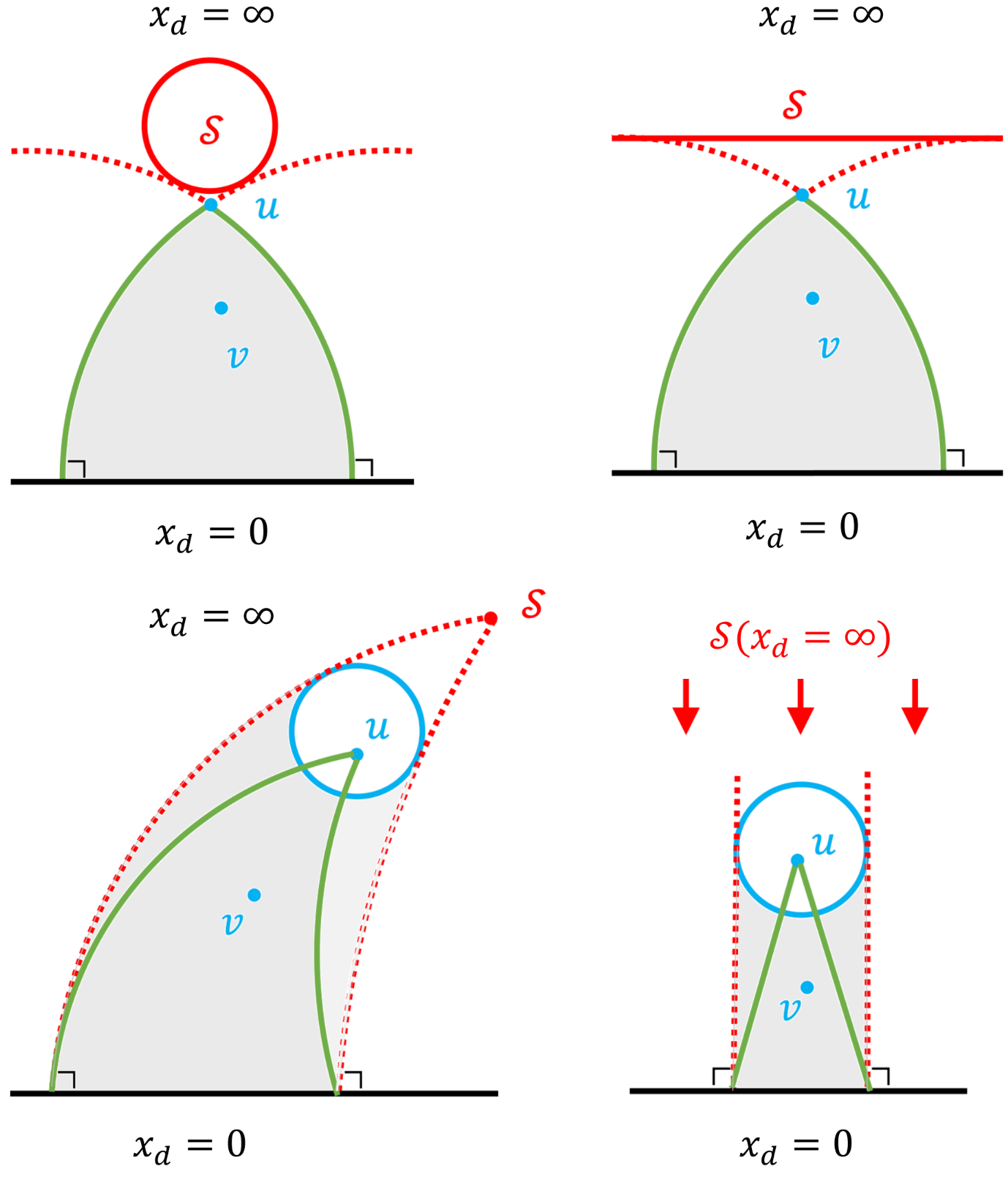}}}$ 
\hfill
$\vcenter{\hbox{\includegraphics[width=0.4\textwidth]{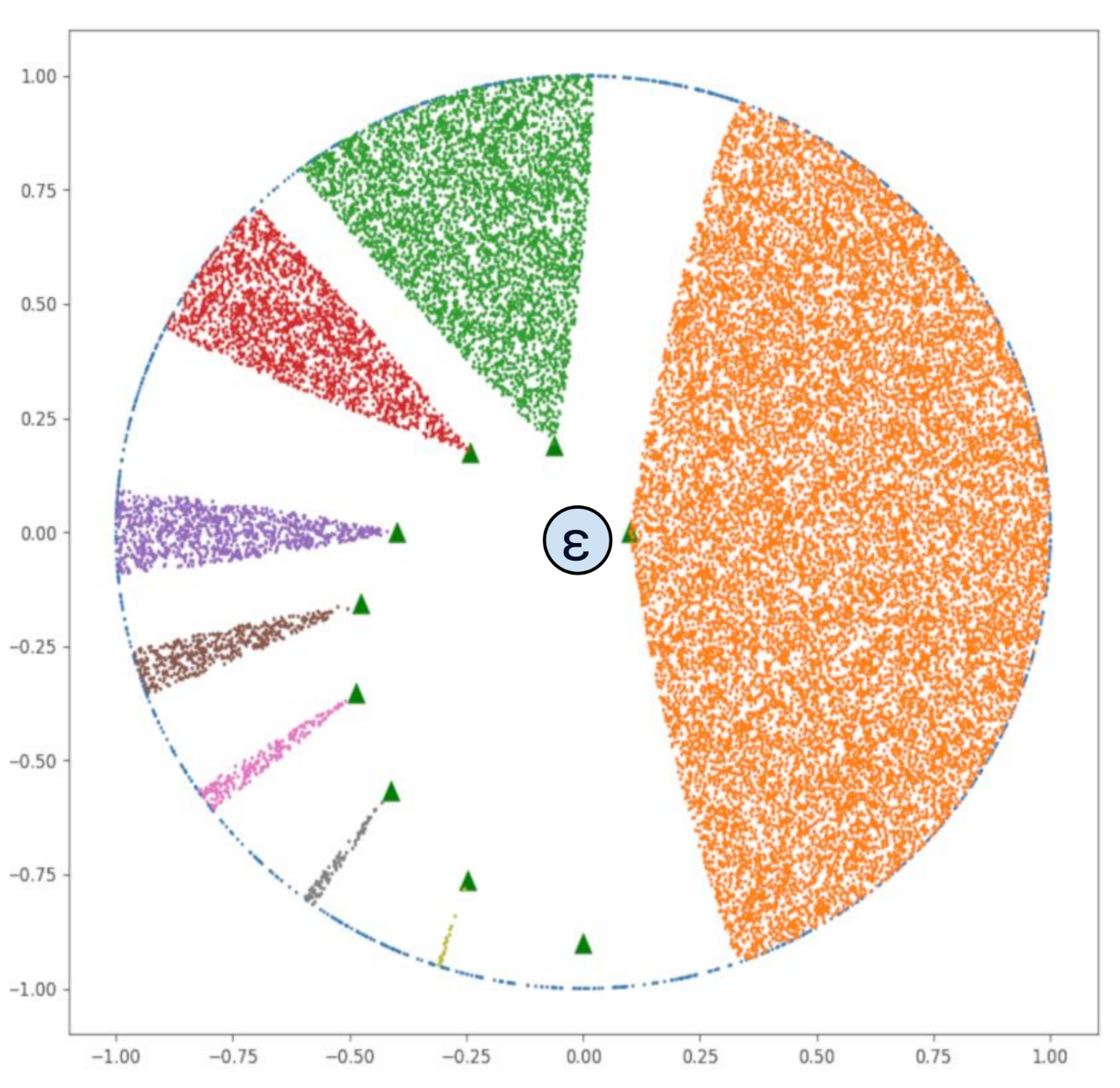}}}$
\vspace{-0.1in}
\caption{
(Left Quadrant)
Clockwise from top left: finite setting penumbral cone, infinite setting penumbral cone, infinite setting umbral cone, and finite setting umbral cone.
All figures are in $\mathsf{H}^2$.
Shadows are represented by shaded regions, and cones are enclosed in green.
In all figures, $u \prec v$ since $v$ is in the cone of $u$.
(Right)
Ganea's entailment cones, as defined on the Poincar\'e ball.
Note the $\epsilon$-ball where cones are not defined, which makes optimization nontrivial.
Figure from \cite{ganea2018hyperbolic}.
}
\vspace{-0.1in}
\label{fig:conesfig}
\end{figure}

Entailment cones in hyperbolic space were first introduced by \citet{ganea2018hyperbolic} to embed partial orders.
The general concept of Ganea's entailment cones is to capture partial orders between points with membership relations between points and geodesically convex cones rooted at said points.
That is, if $u \in$ the cone of $v$, then $v \prec u$.
Ganea's cones (\fref{fig:conesfig} right) are defined on the Poincar\'e ball by a radial angle function $\psi(r)$, with an $\epsilon$-ball around the origin where cones are undefined \cite{anderson2007, ganea2018hyperbolic}. 
This makes learning complicated models with Ganea's cones difficult, as optimization on the Poincar\'e ball is nontrivial and the $\epsilon$-ball negatively impacts embedding initializations \cite{yu2023shadow, ganea2018hyperbolic}.

In this work, we instead use the shadow cone construction introduced in \cite{yu2023shadow} and operate on the \php{}, which makes computing our desired attention function numerically simpler. 
Shadow cones are defined by shadows cast by points and a single light source $\mathcal{S}$, and consist of the penumbral and umbral settings (\fref{fig:conesfig} left quadrant).
In the penumbral setting, $\mathcal{S}$ is a ball of fixed radius and points are points.
The shadow and cone of $u$ are both the region enclosed by geodesics through $u$ tangent to $\mathcal{S}$.
In the umbral setting, $\mathcal{S}$ is instead a point, and points are centers of balls of fixed radius.
Here, the shadow of $u$ is the region enclosed by geodesics tangent to the ball around $u$ that intersect at $\mathcal{S}$.
However, to preserve transitivity, the cone of $u$ is a subset of the shadow of $u$ (see \fref{fig:conesfig}).
The shadow cone formulation can also be achieved with subset relations between shadows instead of membership relations between points and cones, which may be conceptually clearer.

\textbf{Infinite-setting Shadow Cones.} For penumbral cones, when $\mathcal{S}$ is a ball with center at $x_d = \infty$, $\mathcal{S}$'s boundary is a horosphere of user-defined height $h$.
Here, all shadows are defined by intersections of Euclidean semicircles of Euclidean radius $h$.
Notably, the infinite setting penumbral cone construction is similar to Ganea's cones under an isometry from the \php{} to the Poincar\'e ball where $\mathcal{S}$ maps to the $\epsilon$-ball \cite{yu2023shadow}.
For umbral cones, when $\mathcal{S}$ is  $x_d = \infty$, shadows are regions bounded by Euclidean lines perpendicular to the $x$-axis.

Unlike Ganea's cones and penumbral cones, umbral cones are not geodesically convex \cite{yu2023shadow}.
That is, the shortest path between two points in an umbral cone may not necessarily lie in the cone.
In a tree, this corresponds to the shortest path between two nodes not being in the subtree of their LCA, which is not possible.
Empirically, while still better than dot product attention, umbral attention usually performs worse than penumbral attention.

\subsection{Lowest Common Ancestor (LCA) and Least Upper Bound}
The LCA of two nodes $u, v$ in a directed acyclic graph is the lowest (deepest in hierarchy) node that is an ancestor of both $u$ and $v$.
Although the two terms are similar, the LCA is \textit{not} the same as the least upper bound of a partial ordering.
The least upper bound of two points $x, y$ in a partial ordering, denoted $\sup(x, y)$, is the point $p$ such that $p \preceq x, y$ and $\forall q$ where $q \preceq x, y$, $q \preceq p$.
The key difference is that all other upper bounds must precede the least upper bound, while not all ancestors of $u$ and $v$ must also be ancestors of $\lca(u, v$).
Furthermore, $p$ may not actually exist.
\section{Hierarchy Aware Attention}

Here, we describe cone attention using the shadow cone construction, and discuss projection functions onto $\mathbb{H}^d$ that allow us to use cone attention within attention networks.
All definitions use the infinite-setting shadow cones, and proofs and derivations are in the appendix.
For clarity, we refer to Ganea's entailment cones as ``Ganea's cones'' and the general set of cones that captures entailment relations (e.g. Ganea's cones and shadow cones) as ``entailment cones'' \cite{ganea2018hyperbolic, yu2023shadow}. 
Our methods are agnostic entailment cone choice, and can also be used with Ganea's cones.

\subsection{Cone Attention}

\begin{figure}
\vspace{-0.35in}
\centering
\includegraphics[width=0.5\linewidth]{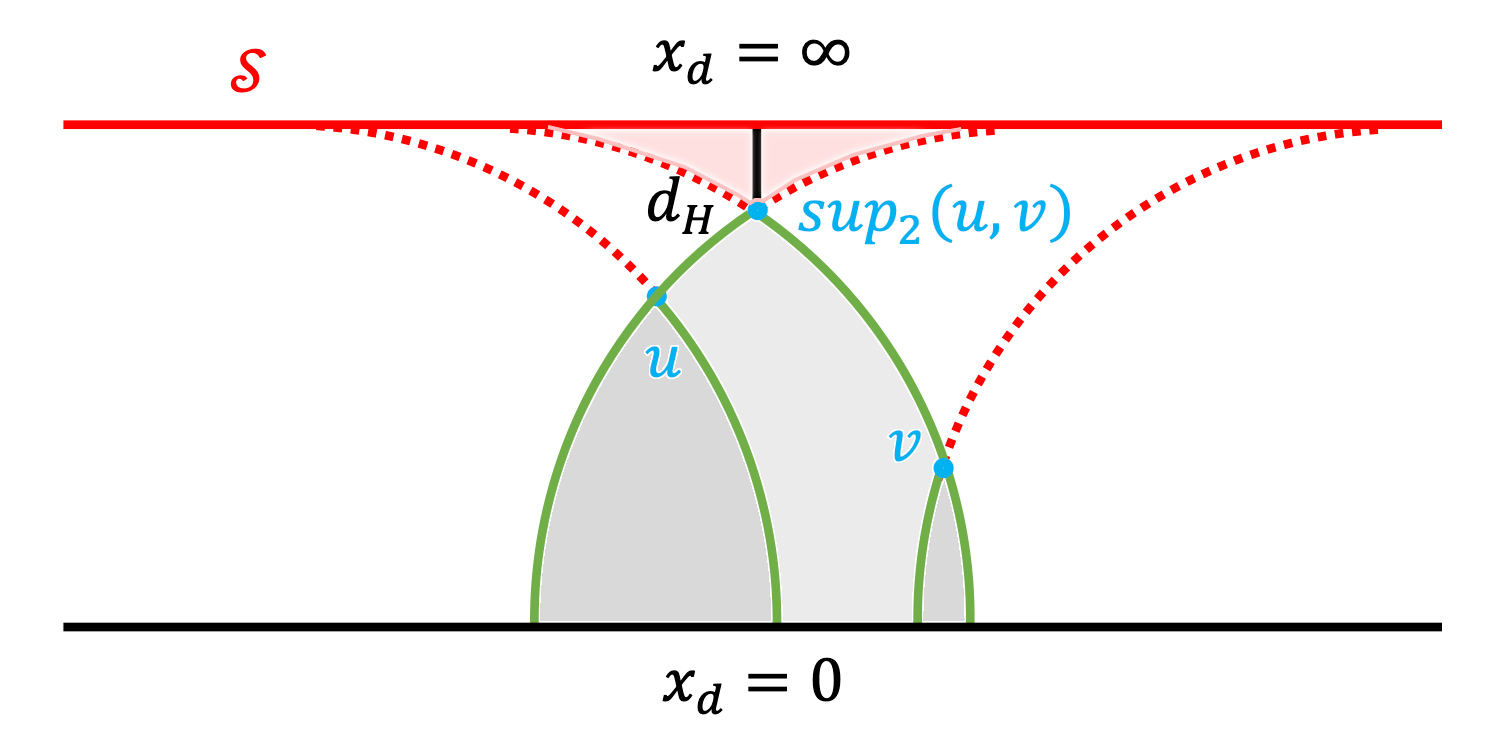}
\caption{
In this example using penumbral cones in $\mathsf{H}^d$, $\sup_2(u, v)$ is the lowest common ancestor of $u$ and $v$.
The red region is the set of points $P$ s.t. $p \in P \preceq u, v$.
Of these, $\sup_2(u, v)$ is the lowest point whose cone (light gray) contains both $u$ and $v$.
Here, $\mathcal{S}$ is the root of all hierarchies, and points closer to $x_d = 0$ are closer to the ``leaves'' of hierarchies. 
If $d = 2$, then $\sup_2(u, v)$ is also the least upper bound of $u$ and $v$ in the partial ordering defined by entailment cones.
}
\vspace{-0.2in}
\label{fig:mincone}
\end{figure}
 
We wish to associate $u, v \in \mathsf{H}^d$ by their LCA in some latent tree $T$, which is analogous to finding their LCA, denoted $\sup_2(u, v)$, in the partial ordering defined by entailment cones.
Formally,

\begin{equation}
\textstyle\sup_2(u, v) = \displaystyle r\left(\argmax_{C: u, v \in C} d_H(\mathcal{S}, r(C))\right)
\end{equation}

where $r(C)$ denotes the root of a cone $C$.
This corresponds to finding the cone that is farthest away from $\mathcal{S}$, which is the root of all hierarchies in the shadow cones construction.
When $d = 2$, $\sup_2(u, v)$ also has the nice property of being the least upper bound $\sup(u, v)$ in the partial ordering defined by shadow cones.
Then, we define the similarity between $u$ and $v$ as 


\begin{equation}
K(u, v) = f(\hdist(\textstyle\sup_2(u, v), \mathcal{S}))
\end{equation}

where $f$ is a user-defined monotonically increasing function.
If $K(u, v) > K(u, w)$, then $\hdist(\sup_2(u, v), \mathcal{S}) > \hdist(\sup_2(u, w), \mathcal{S})$, which implies that $u$ and $v$ have a more recent ``ancestor'' than $u$ and $w$.
Thus, $K$ gives a higher similarity score to points who have a more recent ``ancestor'' in $T$.
In the infinite-setting shadow cone construction, $\sup_2(u, v)$ is root of the minimum height (literal lowest) cone that contains both $u$ and $v$, or 
$\sup_2(u, v) = r\left(\argmin_{C: u, v \in C} r(C)_d\right)$.

Using this, we provide definitions for $K(u, v)$ in the infinite-setting shadow cone construction.
Both the umbral and penumbral definitions correspond to the Euclidean height of $\sup_2(u, v)$.
In the penumbral setting, when $\sup_2(u, v)$ does not exist, we return the Euclidean height of lowest light source where $\sup_2(u, v)$ exists.
$x_d$ denotes the last dimension of $x$, and $x_{:-1}$ denotes the first $d-1$ dimensions of $x$.
$\gamma \in \mathbb{R}^+$ corresponds to the softmax ``temperature'' in attention.
In the penumbral setting, $r \in \mathbb{R}^+$ is the user-defined height of the horosphere light source.
In the umbral setting, $r \in \mathbb{R}^+$ is the user-defined radius of the ball centered at each point.

\begin{definition}
\textbf{Penumbral Attention:}
\begin{equation}
K(u, v) = \exp\left(-\gamma \max\left(u_d, v_d, \sqrt{r^2 - \left(\frac{\sqrt{r^2 - u_d^2} + \sqrt{r^2 - v_d^2} - \|u_{:-1} - v_{:-1}\|}{2}\right)^2}\right)\right)
\end{equation}

when there exists a cone that contains $u$ and $v$, and 

\begin{equation}
K(u, v) = \exp\left(-\gamma \sqrt{\left(\frac{\|u_{:-1} - v_{:-1}\|^2 + u_d^2 - v_d^2}{2\|u_{:-1} - v_{:-1}\|}\right)^2 + v_d^2}\right)
\end{equation}

otherwise. There exists a cone that contains $u$ and $v$ when

\begin{equation}
\left(\|u_{:-1} - v_{:-1}\| - \sqrt{r^2 - u_d^2}\right)^2 + v_d^2 < r^2
\end{equation}
\end{definition}

\begin{definition}
\textbf{Umbral Attention:}
\begin{equation}
K(u, v) = \exp\left(-\gamma \max\left(u_d, v_d, \frac{\|u_{:-1} - v_{:-1}\|}{2\sinh(r)} + \frac{u_d + v_d}{2}\right)\right)
\end{equation}
\end{definition}

These definitions possess a rather interesting property -- when $u_d = v_d$ and $K$ is normalized across a set of $v$s, cone attention reduces to the Laplacian kernel $K(u_{:-1}, v_{:-1}) = \exp(-\gamma\|u_{:-1} - v_{:-1}\|)$.
Since Euclidean space is isomorphic to a horosphere, and $u$ and $v$ are on the same horosphere if $u_d = v_d$, cone attention can also be seen as an extension of the Laplacian kernel \cite{oh2022euclidean}.

Cone attention and dot product attention both take $O(n^2d)$ time to compute pairwise attention between two sets of $n$ $d$-dimensional tokens $q, k \in \mathbb{R}^{n \times d}$ \cite{wang2020linformer}.
However, cone attention takes more operations than dot product attention, which computes $qk^\top$.
In transformers, our PyTorch cone attention implementations with \texttt{torch.compile} were empirically 10-20\% slower than dot product attention with \texttt{torch.bmm} (cuBLAS)\cite{pytorch} (see \fref{sec:runtime}).
\texttt{torch.compile} is not optimal, and a raw CUDA implementation of cone attention would likely be faster and narrow the speed gap between the two methods \cite{pytorch}.

\subsection{Mapping Functions}
\label{sec:mappingfn}

Here, we discuss mappings from Euclidean space to the \php{}.
These mappings allow us to use cone attention within larger models.
The canonical map in manifold learning is the exponential map $\Exp_x(v)$, which maps a vector $v$ from the tangent space of a manifold $\mathcal{M}$ at $x$ onto $\mathcal{M}$ by following the geodesic corresponding to $v$ \cite{peng2021hyperbolic, anderson2007}. 
In the \php{} \cite{yu2021representing},

\begin{equation}
\Exp_x(v)_{:-1} = x_{:-1} + \frac{x_d}{\|v\|/\tanh(\|v\|) - v_d} v_{:-1} \qquad
\Exp_x(v)_d = \frac{x_d}{\cosh(\|v\|) - v_d \sinh(\|v\|)/\|v\|}
\end{equation}

While $\Exp_x(v)$ is geometrically well motivated, it suffers from numerical instabilities when $\|v\|$ is very large or small.
These instabilities make using the exponential map in complicated models such as transformers rather difficult.
Using \texttt{fp64} reduces the risk of numerical over/underflows, but \texttt{fp64} significantly reduces performance on GPUs, which is highly undesirable \cite{nvidiaa100}.

Instead, we use maps of the form $(x_{:-1}, x_d) \to (x_{:-1}f(x_d), f(x_d))$ that preserve the exponential volume of hyperbolic space while offering better numerics for large-scale optimization.
Our use of alternatives to $\Exp_x(v)$ follows \citet{gulcehre2018hyperbolic}'s psuedopolar map onto the Hyperboloid.
Geometrically, since $n$-dimensional Euclidean space is isomorphic to a horosphere in $\mathbb{H}^{n+1}$, these maps corresond to selecting a horosphere with $f(x_d)$ and then projecting $x_{:-1}$ onto that horosphere \cite{oh2022euclidean}.
To achieve exponential space as $x_d \to -\infty$, we use functions $f$ of the form $\exp(\cdot)$.

For the infinite-setting umbral construction, since there is no restriction on where points can go in the \php{}, we map $x \in \mathbb{R}^d$ to $\mathsf{H}^d$ with $\psi:\mathbb{R}^d \to \mathsf{H}^d$:
\begin{equation}
\psi(x)_{:-1} = x_{:-1} \exp(x_d) \qquad
\psi(x)_d = \exp(x_d)
\end{equation}

For the infinite-setting penumbral construction, since the mapped point cannot enter the light source at height $h$, we map $x \in \mathbb{R}^d$ to area below the light source with $\xi:\mathbb{R}^d \to \mathsf{H}^d$
\begin{equation}
\xi(x)_{:-1} = x_{:-1} \frac{h}{1 + \exp(-x)} \qquad
\xi(x)_d = \frac{h}{1 + \exp(-x)}
\end{equation}

While $\xi$ is structurally the sigmoid operation, note that $\sigmoid(x) = \exp(-\softplus(-x))$.
Since $\softplus(x) \approx x$ for large values of $x$, $\xi$ preserves the exponential volume properties we seek.
\section{Experiments}

Here, we present an empirical evaluation of cone attention in various attention networks.
For each model we test, our experimental procedure consists of changing $K$ in attention and training a new model from scratch.
Unless otherwise noted in the appendix, we use the code and training scripts that the authors of each original model released.
We assume released hyperparameters are tuned for dot product attention, as these models were state-of-the-art (SOTA) when new.

\subsection{Baselines}

Our main baseline is dot product attention, as it is the most commonly used form of attention in modern attention networks.
Additionally, we compare cone attention against \citet{gulcehre2018hyperbolic}'s hyperbolic distance attention and the Laplacian kernel. 
To the best of our knowledge, few other works have studied direct replacements of the dot product for attention.

\citet{gulcehre2018hyperbolic}'s original hyperbolic distance attention formulation not only computed the similarity matrix $K$ in the Hyperboloid, but also aggregated values $v$ in hyperbolic space by taking the Einstein midpoint with respect to weights $\alpha$ in the Klein model (see appendix for definitions) \cite{peng2021hyperbolic}.
However, the Einstein midpoint, defined as 
\begin{equation}
m(\alpha, v) = \sum_i \left(\frac{\alpha_i\gamma(v_i)}{\sum_j \alpha_j\gamma(v_j)}\right)
\end{equation}
where $\gamma(v) = 1/\sqrt{1 - \|v\|^2}$, does not actually depend on how $\alpha$ is computed.
That is, $\alpha$ could be computed with Euclidean dot product attention or cone attention and $m(\alpha, v)$ would still be valid.
The focus of our work is on computing similarity scores, so we do not use hyperbolic aggregation in our experiments.
We test hyperbolic distance attention using both \citet{gulcehre2018hyperbolic}'s original pseudopolar projection onto the Hyperboloid model and with our $\psi$ map onto the \php{}.

\subsection{Models}

We use the following models to test cone attention and the various baselines. 
These models span graph prediction, NLP, and vision tasks, and range in size from a few thousand to almost 250 million parameters.
While we would have liked to test larger models, our compute infrastructure limited us from feasibly training billion-parameter models from scratch.

\textbf{Graph Attention Networks.}
Graph attention networks (GATs) were first introduced by \citet{velivckovic2017graph} for graph prediction tasks.
GATs use self-attention layers to compute node-level attention maps over neighboring nodes. 
The original GAT used a concatenation-based attention mechanism and achieved SOTA performance on multiple transductive and inductive graph prediction tasks \cite{velivckovic2017graph}.
We test GATs on the transductive Cora and inductive multi-graph PPI datasets \cite{cora, ppi}.

\textbf{Neural Machine Translation (NMT) Transformers.}
Transformers were first applied to NMT in \citet{vaswani2017attention}'s seminal transformer paper.
We use the fairseq \texttt{transformer\_iwslt\_de\_en} architecture to train a German to English translation model on the IWSLT'14 De-En dataset \cite{ott2019fairseq, iwslt2014}.  
This architecture contains 39.5 million parameters and achieves near-SOTA performance on the IWSLT'14 De-En task for vanilla transformers \cite{ott2019fairseq}.
As this model is the fastest transformer to train out of the tested models, we use it for ablations.

\textbf{Vision Transformers.}
Vision transformers (ViT) use transformer-like architectures to perform image classification \cite{dosovitskiy2020image}.
In a ViT, image patches are used as a tokens in a transformer encoder to classify the image.
We use the \textit{Data Efficient} Vision Transformer (DeiT) model proposed by FAIR, which uses a student-teacher setup to improve the data efficiency of ViTs \cite{touvron2021training}.
DeiTs and ViTs share the same architecture, and the only differences are how they are trained and the distillation token.
We train DeiT-Ti models with 5 million parameters on the ImageNet-1K dataset for 300 epochs\cite{touvron2021training, imagenet}.
We also train cone and dot product attention for 500 epochs, as we observed that training for more iterations improves performance.

\textbf{Adaptive Input Representations for Transformers.}
Adaptive input representations were introduced by Baevski and Auli for transformers in language modeling tasks \cite{baevski2018adaptive}.
In adaptive inputs, rarer tokens use lower dimensional embeddings, which serves as a form of regularization.
We use the fairseq \texttt{transformer\_lm\_wiki103} architecture (246.9M parameters) and train models on the WikiText-103 language modeling dataset with a block size of 512 tokens \cite{ott2019fairseq, merity2016pointer}.
We also test the same architecture without adaptive inputs, which has 520M parameters.
This version converges in significantly fewer iterations, allowing us to train such a large model.

\textbf{Diffusion Transformers.}
Diffusion transformers (DiT) are diffusion models that replace the U-Net backbone with a transformer \cite{peebles2022scalable}.
DiTs operate on latent space patches, so we expect there to be less hierarchical information vs. taking image patches.
We use DiTs to test cone attention when the data is less hierarchical.
We train DiT-B/4 models with 130M parameters on ImageNet-1K \cite{peebles2022scalable, imagenet}. 
\section{Results}

\begin{table}
\vspace{-0.2in}
\centering
\caption{
Performance of various attention methods across models and tasks. 
$^*$ indicates the model failed to converge or ran into NaN errors. 
$\uparrow$ indicates higher is better, and $\downarrow$ indicates lower is better. 
Cone attention methods ($\dagger$) generally outperform dot product attention and other baselines.
Model default refers to a model's default attention method, which is dot product attention except for GATs.
}
\begin{tabular}{@{}lcccc@{}}
\toprule
Method                           & \begin{tabular}[c]{@{}c@{}} NMT IWLST\\ (BLEU $\uparrow$) \end{tabular} & \begin{tabular}[c]{@{}c@{}c@{}}DeiT-Ti Imagenet\\ Top-1 / 5 (Acc. $\uparrow$) \\ 300 Epochs\end{tabular} & \begin{tabular}[c]{@{}c@{}c@{}}DeiT-Ti Imagenet\\ Top-1 / 5 (Acc. $\uparrow$) \\ 500 Epochs\end{tabular} & \begin{tabular}[c]{@{}c@{}} GAT Cora /\\ PPI (Acc. $\uparrow$) \end{tabular}    \\ \midrule
Model Default                    & 34.56          & 72.05 / 91.17  & 73.65 / 91.99 & 0.831 / 0.977          \\
Dot Product                      & 34.56          & 72.05 / 91.17  & 73.65 / 91.99 & 0.834 / 0.985          \\
Penumbral$^\dagger$                        & \textbf{35.56} & 72.67 / 91.12 & 74.34 / 92.38 & 0.835 / \textbf{0.990} \\
Umbral$^\dagger$                           & 35.07          & \textbf{73.14 / 91.82} & \textbf{74.46 / 92.54} & \textbf{0.836} / 0.989 \\
$\hdist$ $\mathsf{H}^n$ $\xi$                & 32.54          & 49.19$^*$ / 74.68$^*$ & -- & 0.834 / 0.987          \\
$\hdist$ Hyperboloid & 33.80          & 0.10$^*$ / 0.45$^*$ & -- & 0.13$^*$ / 0.989       \\
Laplacian Kernel                 & 34.68          & 71.25 / 90.84 & -- & 0.823 / 0.986          \\ \bottomrule
\end{tabular}

\begin{tabular}{@{}lccc@{}}
\toprule
Method      & \begin{tabular}[c]{@{}c@{}}Adaptive Inputs WikiText-103\\ 0 / 480 Context Window (Ppl. $\downarrow$)\end{tabular} & \begin{tabular}[c]{@{}c@{}}Without Adaptive\\ Inputs\end{tabular} & \begin{tabular}[c]{@{}c@{}}DiT-B/4 @ 400K Steps\\ (FID-50K $\downarrow$)\end{tabular} \\ \midrule
Dot Product & 20.86 / 19.22                                                                                                     & 26.62 / 24.73                                                     & 68.9                                                                                  \\
Penumbral   & \textbf{20.72 / 19.01}                                                                                            & \textbf{26.44 / 24.31}                                                 & 67.7                                                                                  \\
Umbral      & 21.16 / 19.59                                                                                                     & 27.82 / 26.73                                                     & \textbf{67.6}                                                                           \\ \bottomrule
\end{tabular}

\label{tbl:mainres}
\vspace{-0.05in}
\end{table}

Table \ref{tbl:mainres} summarizes the performance of cone attention across various models and tasks.
Both penumbral and umbral attention significantly outperform baselines on the NMT IWSLT and DeiT-Ti Imagenet tasks.
For DeiT-Ti, umbral attention achieves 73.14\% top-1 and 91.82\% top-5 accuracy at 300 epochs and 74.46\% top-1 and 92.54\% top-5 accuracy at 500 epochs, which matches a distilled DeiT-Ti with more parameters (74.5\% / 91.9\%) \cite{touvron2021training}.
On the GAT tasks, cone attention again outperforms baselines and achieves Graph Convolutional Network-level performance on the PPI dataset.
Interestingly, almost all baselines, including dot product attention, outperform the concatenation-based attention in the original GAT paper.
The two GAT tasks also reveal some interesting differences between penumbral and umbral attention.
The Cora citation dataset is more tree-like with an average of 2 edges per node and chronological dependencies between nodes (a paper cannot cite a newer paper), while the PPI dataset is closer to a clustered graph with 14.3 edges per node \cite{velivckovic2017graph, ppi, cora}.
As noted in \cite{yu2023shadow}, umbral cones appear to be better for strict hierarchies, while penumbral cones capture more complicated relationships such as those in the PPI dataset.

The adaptive inputs method regularizes models by reducing $d$ for rare words.
We expect rarer words to be closer to the leaves of a word hierarchy, so the adaptive inputs method also acts as a hierarchical prior on the data \cite{yan2022hierarchical}.
We use adaptive inputs to test cone attention when combined with other hierarchical priors.
Penumbral attention outperforms dot product attention with or without adaptive inputs, but the gap between the two methods is larger without adaptive inputs.
On the other hand, umbral attention performs worse than dot product attention and penumbral attention on the WikiText103 task, with or without adaptive inputs.
Since umbral cones are not geodesically convex, which means that the shortest path between two points in a cone may not necessarily lie entirely in that cone, we suspect that convexity is important for the WikiText-103 language modeling task.
In the DiT model, patches are taken at the latent space-level, which we suspect gives less hierarchical information than when patches are taken from an input image \cite{peebles2022scalable}.
Here, cone attention still outperforms dot product attention, but to a lesser degree. 
This mirrors our expectation that the DiT model does not benefit as much from hierarchical attention, and verifies that in such cases, using cone attention does not hurt performance.
Interestingly, umbral cones slightly outperform penumbral cones in the DiT-B/4 task, which may be a result of patches being over the latent space, and not the input image.

\subsection{Effect of Mapping Functions}

\begin{table}
\centering
\caption{
Comparison of various mappings from Euclidean space to hyperbolic models for the NMT IWSLT task (BLEU scores, higher is better).
The exponential map generally performs worse than $\psi$ and the pseudopolar map.
While $\psi$ also performs worse than the psuedopolar map, table \ref{tbl:mainres} indicates that the pseudopolar map is less numerically stable than $\psi$.
}
\begin{tabular}{@{}lcccc@{}}
\toprule
Method    & \multicolumn{1}{l}{$\Exp_O(v) \to \mathsf{H}^d$} & \multicolumn{1}{l}{$\xi \to \mathsf{H}^d$} & \multicolumn{1}{l}{$\psi \to \mathsf{H}^d$} & \multicolumn{1}{l}{Pseudopolar $\to$ Hyperboloid} \\ \midrule
Penumbral & 35.12                                            & \textbf{35.56}                                       & -                                          & -                                                 \\
Umbral    & 34.63                                            & -                                           & \textbf{35.07}                                      & -                                                 \\
$\hdist$  & 30.59                                            & -                                           & 32.54                                      & \textbf{33.80}                                             \\ \bottomrule
\end{tabular}
\label{tbl:map}
\end{table}

Table \ref{tbl:map} compares how $\psi$ and $\xi$ compare to the exponential map and psueodpolar map (\fref{sec:mappingfn}) on the NMT IWSLT task.
Here, we use the exponential map at the origin of $\mathsf{H}^d$, $O = (0, 0, ..., 1)$.
To compare $\Exp_O(v)$ to $\psi$, we first take $\Exp_O(v)$ and then project the point onto the boundary of $L$ if $\Exp_O(v)$ is in $L$.
In the infinite penumbral setting, this corresponds to taking $\Exp_O(v)_d = \min(\Exp_O(v)_d, h)$.
$\psi$ and $\xi$ generally perform much better than taking the exponential map at the origin $O$, which suggests that $\psi$ and $\xi$ have better optimization properties.
For $\hdist$ attention, Gulcehre et al.'s pseudopolar map slightly outperforms $\xi$.
However, table $\ref{tbl:mainres}$ indicates that outside of this specific task, using the psuedopolar map and Hyperboloid is less numerically stable. 

\subsection{Attention Efficiency and Model Size}

\begin{figure}
\centering
\includegraphics[width=\linewidth]{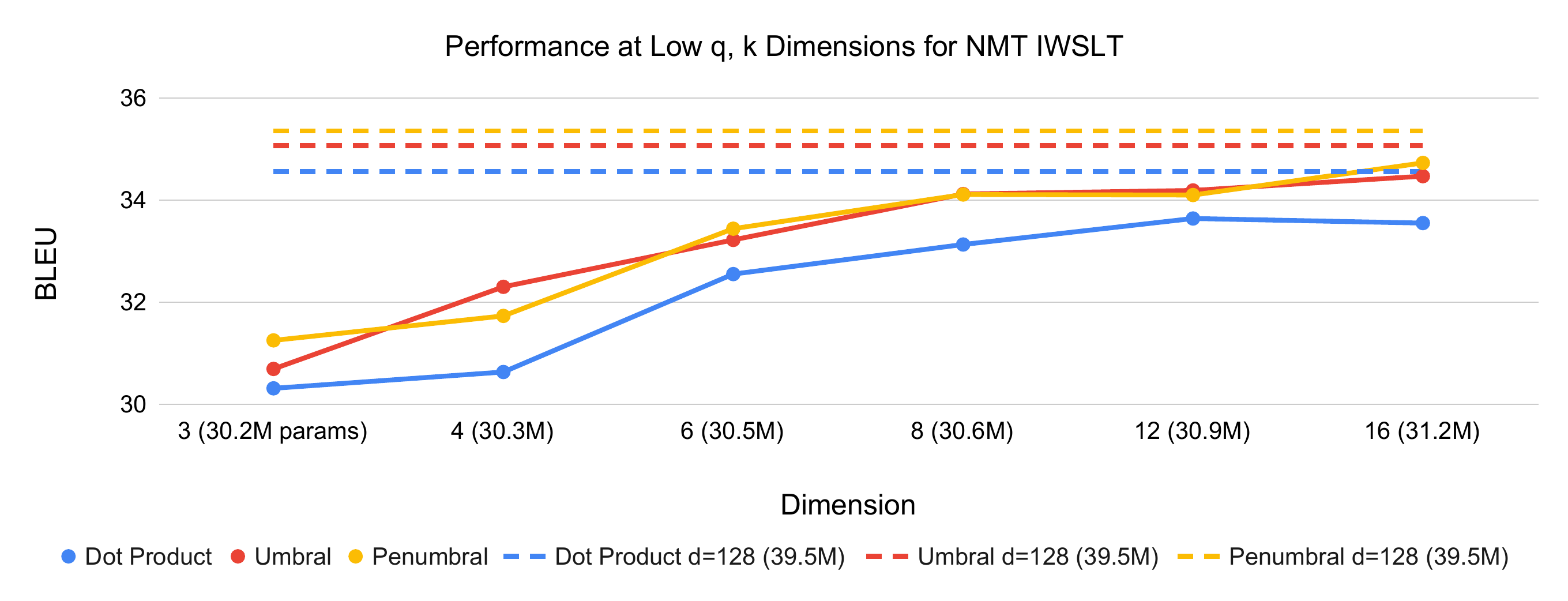}
\vspace{-0.2in}
\caption{
Performance of cone and dot product attention at low dimensions on NMT IWSLT. 
The base architecture uses 128 dimensions, and cone attention is able to match dot product attention with only 16 dimensions. 
For this model, this allows us to use 21\% fewer parameters to reach performance parity, indicating that cone attention more efficiently captures hierarchical information.
}
\label{fig:lowdim}
\end{figure}

\begin{table}
\centering
\caption{Performance of DeiT-Ti at 64 (default) and 16 dimensions, 300 epochs.}
\begin{tabular}{@{}lcc@{}}
\toprule
Method      & d = 64                   & d = 16 \\ \midrule
Dot Product & 72.05 / 91.17          &  71.29 / 90.54      \\
Penumbral   & 72.67 / 91.12 &  \textbf{72.25 / 91.20}      \\
Umbral      & \textbf{73.14 / 91.82}          &  71.67 / 90.71      \\ \bottomrule
\end{tabular}
\label{tab:lowdimdeit}
\vspace{-0.1in}
\end{table}

\Fref{fig:lowdim} shows the performance of cone attention vs. dot product attention at low \textit{token} ($q, k$) embedding dimensions ($d$ from before) on the NMT IWSLT task.
Both umbral and penumbral attention are able to achieve significantly better performance than dot product attention in this regime, matching dot product attention at $d = 128$ with only $d = 16$.
For this model, using 16 dimensions reduces the number of parameters from 39.5M to 31.2M. 
\Fref{tab:lowdimdeit} shows the performance of DeiT-Ti at $d=16$. 
Here, penumbral attention at $d=16$ is able to outperform dot product attention at $d=64$, again indicating that cone attention is able to more efficiently capture hierarchies.

\subsection{Sensitivity to Initialization}
Hyperbolic embeddings are known to be sensitive to initialization \cite{ganea2018hyperbolic,yu2023shadow}.
To test cone attention's sensitivity to initialization, we trained 5 seeds for the IWSLT De2En task for cone attention and dot product attention. 
Dot product had an average BLEU score of 34.59 and standard deviation 0.12, penumbral achieved $35.41 \pm 0.14$, and umbral achieved $35.03 \pm 0.30$.
There was one outlier in the umbral attention trials, and with the outlier removed umbral achieved $35.16 \pm 0.09$. 
The cone attention methods appear to have slightly higher variance than dot product attention, but not significantly so.

\subsection{Attention Heatmaps}

\begin{figure}
\centering
\includegraphics[width=\linewidth]{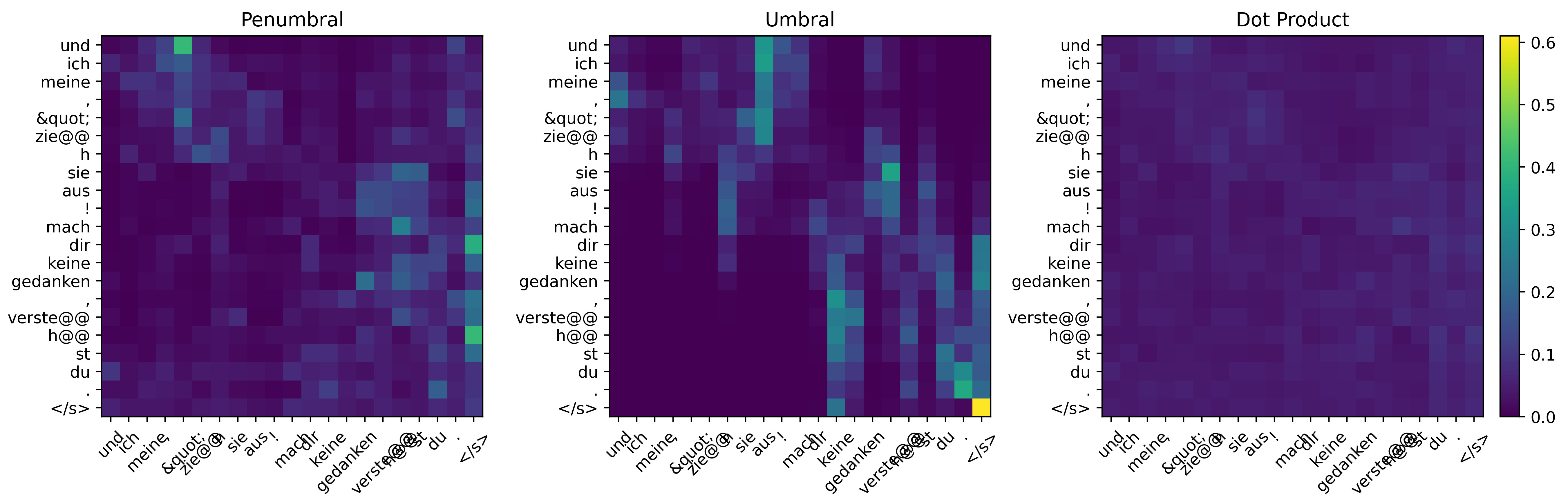}
\vspace{-0.2cm}
\caption{Attention heatmaps from an attention head in a trained IWSLT De2En translation model for the tokenized validation sequence \textit{``Und ich meine, "Zieh sie aus! Mach dir keine gedanken, verstehst du.'}, which translates to \textit{``I'm like, "Get it off! Don't worry about it, you know.''}  The cone attention heatmaps (left and center) have a clear distinction between the two parts of the sequence separated by “!”, whereas the dot product heatmap (right) does not have a clear separation.}
\label{fig:heatmap}
\end{figure}

A natural question arises about what cone attention methods actually learn. 
\Fref{fig:heatmap} shows heatmaps from an attention head in a trained IWSLT De2En translation model. 
The heatmaps for penumbral and umbral attention show clearer separation than the dot product attention heatmap.
Furthermore, the separation in the two cone attention methods happens at the exclamation mark, a natural segmentation of the sequence into two parts. 
Intuitively, cone attention can be seen as an attention method that satisfies certain ``logical constraints,'' such as ``if $z \prec y$ and $y \prec x$, then $z \prec x$,'' which leads to relations between attention scores.
For example. if $K(x, y)$ and $K(y, z)$ are both high, then $K(x, z)$ should also be relatively high in cone attention.
In dot product attention, this is not guaranteed.
If $x = (1, 1, 0, 0), y = (10, 10, 10, 10),$ and $z = (0, 0, 1, 1)$, then $\langle x, y \rangle = \langle y, z \rangle = 20$, but $\langle x, z \rangle = 0$.
We suspect this is a reason why cone attention methods show better separation than dot product attention, which can aid performance.

\section{Conclusion}

We introduce cone attention, a \textit{hierarchy-aware} method for calculating attention.
Cone attention relies on entailment cones and the geometric properties of hyperbolic space to capture complex structural patterns that dot product attention does not explicitly model.
We test cone attention in a variety of attention networks ranging from a few thousand to a few hundred million parameters, and achieve consistent performance improvements in NLP, vision, and graph prediction tasks over dot product attention and baselines.
Cone attention also matches dot product attention with significantly fewer embedding dimensions, opening the potential for smaller models.
These results suggest that cone attention is an effective way to encode hierarchical relationships in attention, and can potentially improve task-level performance in a wide variety of models and task domains.

\textbf{Future Work.}
It remains to be seen how cone attention scales to very large models.
Beyond this, \cite{ganea2018hyperbolic} and \cite{yu2023shadow} suggest that hyperbolic embeddings are sensitive to initialization, which implies that different transformer weight initializations may affect cone attention.


\section*{Acknowledgements}
This work was supported by NSF Award IIS-2008102. 
Compute resources were provided by the Cornell G2 Cluster.

{
\bibliographystyle{plainnat}
\bibliography{bib.bib}
}

\newpage
\section{Appendix}
\suppressfloats
\subsection{Additional Hyperbolic Manifolds and Maps}

\subsubsection{Pseudopolar Coordinates and the Hyperboloid Model}

The pseudopolar map $\pi$ used in \cite{gulcehre2018hyperbolic} is given by 

\begin{equation}
\pi(x)_{:-1} = x_{:-1}\sinh(x_d) \qquad \pi(x)_{d} = \cosh(x_d)
\end{equation}

using the notation from the main text.

The hyperboloid model $\mathcal{H}^{d}$ is a model of $\mathbb{H}^{d}$ in the $d+1$ dimensional Minkowski space.
Formally,

\begin{equation}
\mathcal{H}^{d} = \{x\in\mathbb{R}^{d+1} | \langle x, x\rangle_M = -1, x_{d+1} > 0 \}
\end{equation}

where 

\begin{equation}
\langle q, k \rangle_M = \left(\sum_{i=1}^{n} q_ik_i\right) - q_{d+1}k_{d+1}
\end{equation}

The distance metric on $\mathcal{H}^d$ is given by $\hdist(q, k) = \arcosh(-\langle q, k \rangle_M)$.
We refer the reader to \cite{gulcehre2018hyperbolic} and \cite{anderson2007} for more information on the Hyperboloid model.

\subsubsection{Klein Model}

The Klein model used in \cite{gulcehre2018hyperbolic} for Einstein aggregation is given by $\mathcal{K}^d = \{ x \in \mathbb{R}^n | \|x\| < 1 \}$.
A point in $\mathcal{K}^d$ can be obtained from a point in $\mathcal{H}^d$ by taking the projection 

\begin{equation}
\pi_{\mathcal{H}^d \to \mathcal{K}^d}(x)_{i\le d} = \frac{x_i}{x_{d+1}}
\end{equation}

with inverse

\begin{equation}
\pi_{\mathcal{K}^d \to \mathcal{H}^d}(x) = \frac{(x, 1)}{\sqrt{1 - \|x\|}}
\end{equation}

The distance in $\mathcal{K}^d$ can be computed with $\hdist(q, k) = \arcosh(-\langle \pi_{\mathcal{K}^d \to \mathcal{H}^d}(q), \pi_{\mathcal{K}^d \to \mathcal{H}^d}(k) \rangle_M)$.
We refer the reader to \cite{gulcehre2018hyperbolic} and \cite{anderson2007} for more information on the Klein model.

\subsubsection{Poincar\'e Ball Model and Ganea's Cones}

The Poincar\'e ball model is closely related to the \php{}, and is defined as the manifold $(\mathcal{B}^d, g_p)$, where $\mathcal{B}^d = \{x \in \mathbb{R}^d | \|x\| < 1\}$ and 

\begin{equation}
g_p(x) = \left(\frac{2}{1-\|x\|^2}\right)^2 g_e .
\end{equation}

The distance between two points $q$ and $k$ on the Poincar\'e ball is given by 

\begin{equation}
\hdist(q, k) = \arcosh\left(1 + 2\frac{\|q - k\|^2}{(1-\|q\|^2)(1-\|k\|)^2}\right)
\end{equation}

When $\|x\| \to 1$, $\hdist$ changes very quickly, which makes optimization difficult near the boundary of the Poincar\'e ball.

Ganea's cones are defined by a radial angle function $\psi$.
For a point $x$, the angle of the cone at $x$ is given by 

\begin{equation}
\psi(x) = \arcsin\left(K\frac{1 - \|x\|^2}{\|x\|}\right)
\end{equation}

where $K$ is a user-chosen constant that defines the size of the $\epsilon$-ball. 
Clearly, cones do not exist for small $x$, giving the $\epsilon$-ball.
\subsection{Penumbral Attention Derivation}

\begin{figure}
\centering
\includegraphics[width=0.8\linewidth]{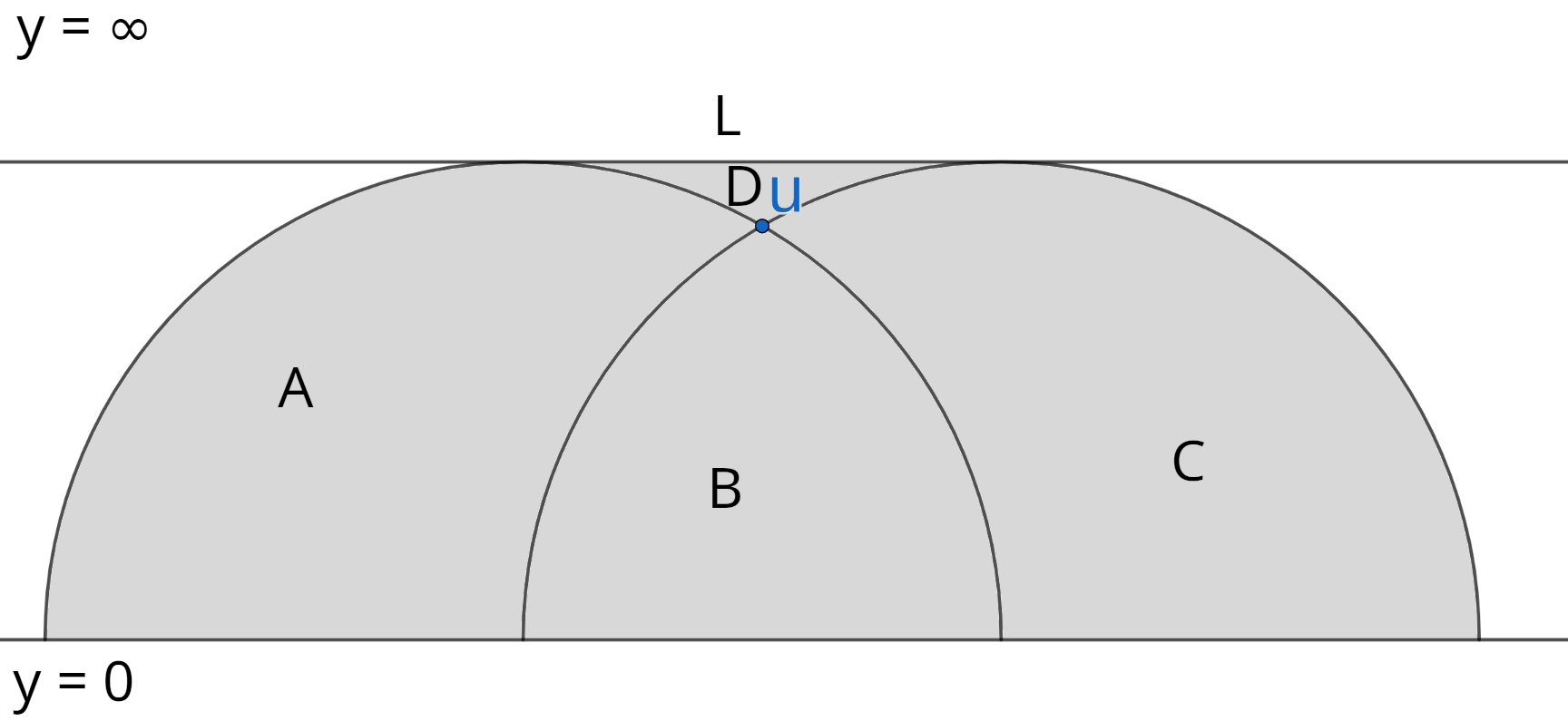}
\caption{The gray area ($A \cup B \cup C \cup D$) is the region of all points that share a cone with $u$. $B$ is the cone of $u$, and $D$ is the region of points whose cones contain $u$ (ancestors of $u$).}
\label{fig:allcone}
\end{figure}

\begin{figure}
\centering
\includegraphics[width=0.8\linewidth]{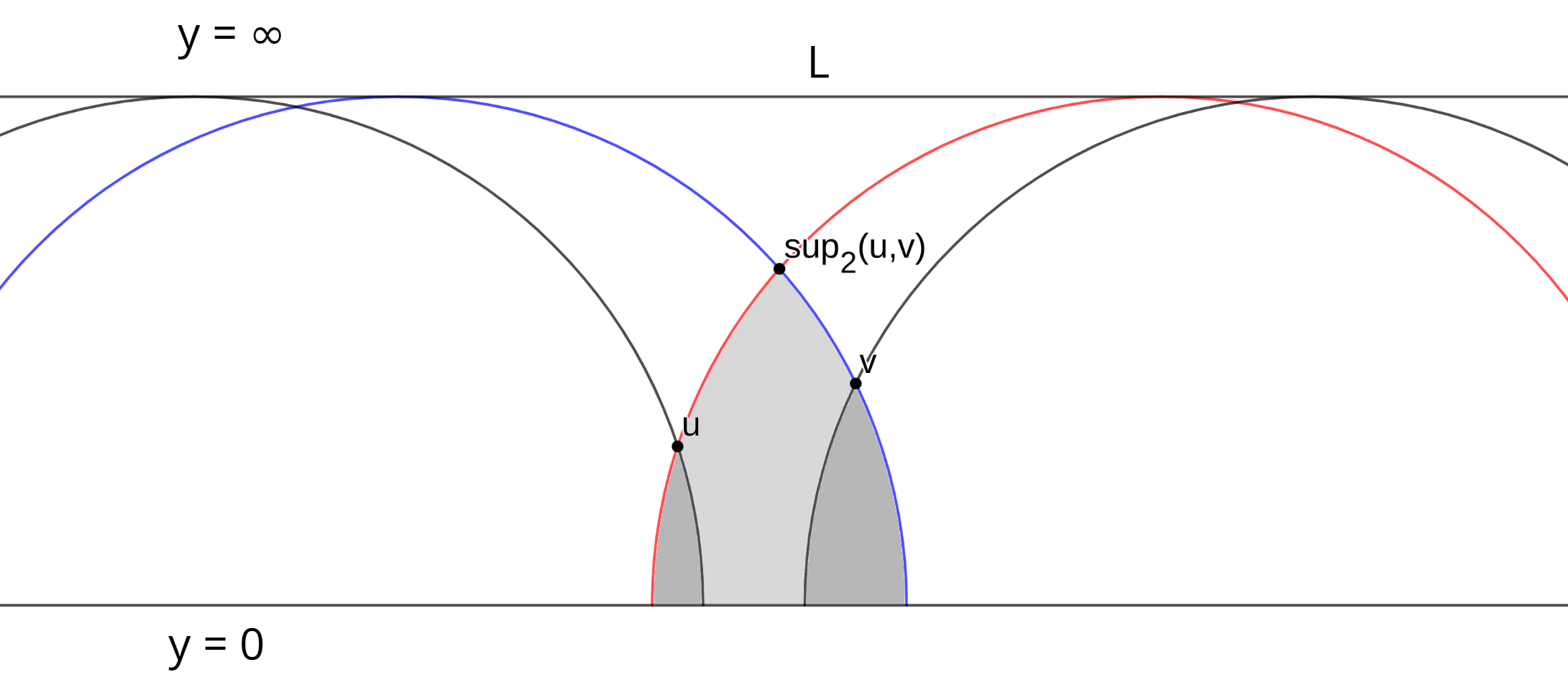}
\caption{If $v_x \ge u_x$, then $\sup_2(u, v)$ is the intersection of the ``right'' geodesic of $u$ (red) and the ``left'' geodesic of $v$ (blue).}
\label{fig:penumbral_deriv}
\end{figure}

Recall the definition of Penumbral Attention in the infinite setting shadow cone construction:

\begin{equation}
K(u, v) = \exp\left(-\gamma \max\left(u_d, v_d, \sqrt{r^2 - \left(\frac{\sqrt{r^2 - u_d^2} + \sqrt{r^2 - v_d^2} - \|u_{:-1} - v_{:-1}\|}{2}\right)^2}\right)\right)
\label{eqn:kpenumbral}
\end{equation}

when there exists a cone that contains $u$ and $v$, and 

\begin{equation}
K(u, v) = \exp\left(-\gamma \sqrt{\left(\frac{\|u_{:-1} - v_{:-1}\|^2 + u_d^2 - v_d^2}{2\|u_{:-1} - v_{:-1}\|}\right)^2 + v_d^2}\right)
\label{eqn:kpenumbralnocone}
\end{equation}

otherwise. There exists a cone that contains $u$ and $v$ when

\begin{equation}
\left(\|u_{:-1} - v_{:-1}\| - \sqrt{r^2 - u_d^2}\right)^2 + v_d^2 < r^2 \qquad \mbox{or} \qquad \|u_{:-1} - v_{:-1}\| \le \sqrt{r^2 - u_d^2}
\label{eqn:exist}
\end{equation}

Below, we give a derivation of this definition in the infinite penumbral cone construction.
From theorem \ref{thm:planethm}, $\sup_2(u, v)$ is the lowest cone that contains $u$ and $v$ in the plane containing $u, v,$ and the ideal point of $\mathcal{S}$.
This lets us derive the height of $\sup_2(u, v)$ in that plane, which is given by the intersection of Euclidean semicircle geodesics with Euclidean radius $r$.

First, we check if there exists a cone that contains $u$ and $v$ in this plane. 
Note that the region of points $v$ s.t. $\exists$ a cone containing $u$ and $v$ is the region contained by the two geodesics forming the cone of $u$ (see \fref{fig:allcone}). 
Assume $u_x = 0$.
Then, in the infininte setting penumbral construction, these geodesics are the two semicircles centered at $\left(\pm\sqrt{r^2 - u_y^2}, 0\right)$ with radius $r$.
We can check if a point is in this region by checking if it is in the quarter circle bounded by $\left(-\sqrt{r^2 - u_y^2}, 0\right)$ and the arc from $\left(-\sqrt{r^2 - u_y^2} - r, 0\right)$ to $\left(-\sqrt{r^2 - u_y^2}, r\right)$, the quarter circle bounded by $\left(\sqrt{r^2 - u_y^2}, 0\right)$ and the arc from $\left(\sqrt{r^2 - u_y^2}  r, 0\right)$ to $\left(\sqrt{r^2 - u_y^2}, r\right)$, or the rectangular region between the two.
This is given by 

\begin{equation}
\left(\left(\left(v_x - \sqrt{r^2 - u_y^2}\right)^2 + (v_y - 0)^2 < r^2\right) \land v_x > \sqrt{r^2 - u_y^2}\right) \lor v_x \le \sqrt{r^2 - u_y^2}
\end{equation}

which reduces to equation \ref{eqn:exist} since $u_x = 0 \implies v_x = \|u_{:-1} - v_{:-1}\|$.

Now, we derive the height of $\sup_2(u, v)$ when there exists a cone containing $u$ and $v$.
Assume WLOG that $v_x \ge u_x$, and normalize $u$ and $v$ s.t. $u_x = -\|u_{:-1} - v_{:-1}\|/2$ and $v_x = \|u_{:-1} - v_{:-1}\|/2$.
If $u \nprec v$ and $v \nprec u$, $\sup_2(u, v)$ is intersection of the ``right'' geodesic of $u$ and the ``left'' geodesic of $v$ (see \fref{fig:penumbral_deriv}).
The center of the right geodesic of $u$ is given by $\left(u_x + \sqrt{r^2 - u_y^2}, 0\right)$, and the center of the left geodesic of $v$ is given by $\left(v_x - \sqrt{r^2 - v_y^2}\right)$.
Since $\sup_2(u, v)$ is equidistant from these two centers, $\sup_2(u, v)_x = \left(\sqrt{r^2 - u_y^2} - \sqrt{r^2 - v_y^2}\right)/2$.
Then,  
\begin{align}
\textstyle\sup_2(u, v)_y &= \displaystyle\sqrt{r^2 - \left(\frac{\sqrt{r^2 - u_y^2} - \sqrt{r^2 - v_y^2}}{2} - \left(u_x + \sqrt{r^2 - u_y^2}\right)\right)^2} \\
&= \sqrt{r^2 - \left(\frac{\|u_{:-1} - v_{:-1}\| - \sqrt{r^2 - u_y^2} - \sqrt{r^2 - v_y^2}}{2}\right)^2} \\
&= \sqrt{r^2 - \left(\frac{\sqrt{r^2 - u_y^2} + \sqrt{r^2 - v_y^2} - \|u_{:-1} - v_{:-1}\|}{2}\right)^2}
\label{eqn:notcone}
\end{align}

If $u \prec v$ or $v \prec u$, then $\sup_2(u, v)$ is equal to $u$ or $v$, respectively.
In this situation, equation \ref{eqn:notcone} is less than $u_y$ or $v_y$, repsectively.
Thus, we take the max of $u_y$, $v_y$, and equation \ref{eqn:notcone} to get equation \ref{eqn:kpenumbral}.

When there does not exist a cone containing $u$ and $v$, we return the height of the lowest light source s.t. there exists such a cone.
This corresponds to the height of the highest point in the extended geodesic through $u$ and $v$.
Since geodesics are Euclidean semicircles, this height is the radius of the Euclidean semicircle through $u$ and $v$.
Let the center of this semicircle be $(x, 0)$, and normalize $u$ and $v$ s.t. $u_x = 0$ and $v_x = \|u_{:-1} - v_{:-1}\|$.
Then, we have

\begin{align}
 x^2 + u_y^2 &= \left(v_x - x\right)^2 + v_y^2 \\
2v_xx &= v_x^2 + v_y^2 - u_y^2 \\
2\|u_{:-1} - v_{:-1}\|x &= \|u_{:-1} - v_{:-1}\|^2 + v_y^2 - u_y^2 \\
x &= \frac{\|u_{:-1} - v_{:-1}\|^2 + v_y^2 - u_y^2}{2\|u_{:-1} - v_{:-1}\|}
\end{align}

The radius of the semicircle through $u$ and $v$ is then

\begin{equation}
\sqrt{\left(\frac{\|u_{:-1} - v_{:-1}\|^2 + v_y^2 - u_y^2}{2\|u_{:-1} - v_{:-1}\|}\right)^2 + u_d^2}
\end{equation}

It is easy to verify that this is symmetric in $u$ and $v$ and gives equation \ref{eqn:kpenumbralnocone}.
Furthermore, it is also easy to verify that when 
\begin{equation}
\left(\|u_{:-1} - v_{:-1}\| - \sqrt{r^2 - u_d^2}\right)^2 + v_d^2 = r^2
\end{equation}

the in-cone and out-of-cone heights are both $r$, and equations \ref{eqn:kpenumbral} and \ref{eqn:kpenumbralnocone} both equal $\exp(-\gamma r)$, making $K$ continuous with respect to the positions of $u$ and $v$.

\subsection{Umbral Attention Derivation}
Recall the definition of Umbral Attention in the infinite setting shadow cone construction:

\begin{equation}
K(u, v) = \exp\left(-\gamma \max\left(u_d, v_d, \frac{\|u_{:-1} - v_{:-1}\|}{2\sinh(r)} + \frac{u_d + v_d}{2}\right)\right)
\label{eqn:umbralattn}
\end{equation}

As shown in \cite{yu2023shadow}, cone regions for infinite setting umbral cones are given by Euclidean triangles with angle apertures of $2\arctan(\sinh(r))$.
WLOG, assume that $v_x \ge u_x$ and normalize $u$ and $v$ s.t. $u_x = 0$ and $v_x = \|u_{:-1} - v_{:-1}\|$.
When $u \nprec v$ and $v \nprec u$, $\sup_2(u, v)$ is given by the intersection of the line with slope $1/\sinh(r)$ through $u$ and the line with slope $-1/\sinh(r)$ through $v$.
This gives 
\begin{align}
y - u_y &= \frac{x - u_x}{\sinh(r)}\\
y - v_y &= \frac{x + v_x}{\sinh(r)}\\
2y &= \frac{v_x - u_x}{\sinh(r)} + v_y + u_y \\
y &= \frac{\|u_{:-1} - v_{:-1}\|}{2\sinh(r)} + \frac{v_y + u_y}{2}
\label{eqn:umbralincone}
\end{align}

where $\sup_2(u, v) = (x, y)$. 
When $u \prec v$ or $v \prec u$, equation \ref{eqn:umbralincone} is less than $u_y$ or $v_y$, respectively.
Thus, we take the max of $u_y, v_y$, and equation \ref{eqn:umbralincone}, giving us equation \ref{eqn:umbralattn}.
When $v_x < u_x$, $\sup_2(u, v)$ becomes the intersection of the line with slope $1/\sinh(r)$ through $v$ and the line with slope $-1/\sinh(r)$ through $u$, which gives us the same result.

\subsection{Theorems}

\begin{theorem}
In the infinite shadow cone construction, the cone with root farthest away from $\mathcal{S}$ that contains $u$ and $v$ is the minimum height cone that contains $u$ and $v$.
\end{theorem}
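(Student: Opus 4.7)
The plan is to reduce the theorem to a one-dimensional monotonicity fact about the Poincar\'e half-space metric. In both infinite-setting constructions the light source $\mathcal{S}$ is centered at the single ideal point $x_d = \infty$: in the penumbral case $\mathcal{S}$ is the horosphere $\{x_d = h\}$, and in the umbral case $\mathcal{S}$ is the ideal point $x_d = \infty$ itself. In either case, among cones that contain $u$ and $v$, ``farthest from $\mathcal{S}$'' and ``lowest Euclidean height of the root'' should coincide, and my goal is to make this equivalence precise.

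First I would establish the monotonicity lemma: for any two candidate root points $p, p' \in \mathsf{H}^d$ lying below $\mathcal{S}$, one has $\hdist(p, \mathcal{S}) > \hdist(p', \mathcal{S})$ iff $p_d < p'_d$. This is immediate from $g_u = g_e / x_d^2$: the shortest path from a point at Euclidean height $a$ to the horizontal horosphere at height $h > a$ is the vertical geodesic segment, whose hyperbolic length is $\int_a^h dt/t = \ln(h/a)$, which is strictly decreasing in $a$. One small thing to check is that the vertical segment really is a minimizer, which follows from the fact that vertical rays are the geodesics orthogonal to the horizontal horospheres $\{x_d = \text{const}\}$, together with the standard characterization of the foot of the perpendicular in a Hadamard manifold. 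For the umbral case the actual distance to $\mathcal{S}$ is $+\infty$; the proper interpretation of ``farther'' here is the Busemann function $B_\infty(p) = -\ln(p_d)$ associated to the ideal point $x_d = \infty$, which is again strictly decreasing in $p_d$, so the same equivalence holds.

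Given the lemma, the theorem follows directly. Let $\mathcal{C}$ denote the set of cones in the infinite shadow-cone construction whose interior (or support) contains both $u$ and $v$, and let $r(C)$ be the root of $C \in \mathcal{C}$. By the lemma applied to $\{r(C) : C \in \mathcal{C}\}$, the map $C \mapsto \hdist(r(C), \mathcal{S})$ (resp.\ $B_\infty(r(C))$ in the umbral case) is a strictly decreasing function of $r(C)_d$. Hence
\begin{equation}
\argmax_{C \in \mathcal{C}} \hdist(r(C), \mathcal{S}) \;=\; \argmin_{C \in \mathcal{C}} r(C)_d,
\end{equation}
which is exactly the claim that the cone with root farthest from $\mathcal{S}$ is the minimum-height cone containing $u$ and $v$.

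The only real obstacle is verifying that, among all hyperbolic paths from $p$ to a point of $\mathcal{S}$, the vertical one is optimal in the penumbral case; this is where one has to invoke the orthogonality of vertical geodesics to horizontal horospheres rather than doing a calculation. Everything else is algebraic monotonicity of $\ln(h/a)$ in $a$, and a clean statement of the Busemann reformulation in the umbral setting so that ``farthest from $\mathcal{S}$'' has a well-defined meaning when $\mathcal{S}$ sits at infinity.
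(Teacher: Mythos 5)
Your proposal is correct and follows essentially the same route as the paper's proof: both reduce the claim to the observation that the distance from a candidate root to $\mathcal{S}$ is realized along the vertical geodesic and is strictly decreasing in the root's Euclidean height, so maximizing $\hdist(r(C),\mathcal{S})$ is the same as minimizing $r(C)_d$. Your explicit $\ln(h/a)$ computation and the Busemann-function reading of ``farthest from $\mathcal{S}$'' in the umbral case (where the literal distance to the ideal point is infinite) are a more careful rendering of a step the paper treats informally, but the underlying argument is the same.
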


\begin{proof}
We prove the umbral and penumbral cases separately.

\textit{Penumbral Cones:}
The distance from boundary of $\mathcal{S}$ to a point $x$ is the length of geodesic orthogonal to $S$ through $x$.
In the infinite penumbral construction, $S$ is a horosphere, so this geodesic is the vertical Euclidean line from $x$ to $\mathcal{S}$.
Clearly, the longer this line is, the lower $x$ is.

\textit{Umbral Cones:}
Here $\mathcal{S}$ is a point, and geodesics through $\mathcal{S}$ are vertical Euclidean lines orthogonal to the ``$x$-axis'' (using the definition of ``$x$-axis'' from the main text).
As with the penumbral case, since the geodesic from a point $x$ to $\mathcal{S}$ is vertical Euclidean line, the longer this line is the lower $x$ is.
\end{proof}

\begin{theorem}
\label{thm:planethm}
In the infinite shadow cone construction, the root of the minimum height cone that contains $u$ and $v$ lies on the plane containing $u$, $v$, and $\mathcal{S}$ (or the ideal point of $\mathcal{S}$).
\end{theorem}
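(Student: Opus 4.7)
The plan is to exploit rotational symmetry of the infinite-setting cone construction about vertical Euclidean axes, then reduce the minimization to the elementary fact that two externally tangent Euclidean balls meet at a single point on the line segment joining their centers. Throughout, let $P$ denote the vertical Euclidean $2$-plane spanned by $u$, $v$, and the ideal point at infinity; this coincides with the plane through $u$, $v$, and $\mathcal{S}$ in the umbral setting and through $u$, $v$, and the ideal point of $\mathcal{S}$ in the penumbral setting, since in both cases $\mathcal{S}$ is centered on the vertical axis of the \php{}.

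First I would show that, at any fixed height $c_d = t$, the horizontal slice of $u$'s ancestor set
\begin{equation}
B_u(t) = \{c_{:-1} \in \mathbb{R}^{d-1} : u \in \operatorname{cone}((c_{:-1}, t))\}
\end{equation}
is a closed Euclidean ball centered at $u_{:-1}$ with some radius $\rho_u(t)$. The key observation is that $\mathcal{S}$ is invariant under every Euclidean rotation about a vertical axis (a horizontal horosphere and the ideal point $\infty$ are both rotation-symmetric), so any such rotation is a hyperbolic isometry that preserves the shadow-cone construction. Rotating $c$ about the vertical line through $u$ therefore leaves the relation $u \in \operatorname{cone}(c)$ unchanged, so membership in $B_u(t)$ depends only on $\|c_{:-1} - u_{:-1}\|$. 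Combined with connectedness and the fact that $u_{:-1} \in B_u(t)$ for every $t > u_d$ (the cone of a point directly above $u$ meets $u$ on its axis), this forces $B_u(t)$ to be a ball around $u_{:-1}$. In the umbral case the radius is explicitly $\rho_u(t) = (t - u_d)\sinh(r)$; in the penumbral case $\rho_u(t)$ is obtained from the 2D calculation already given in the appendix. In both cases $\rho_u$ is continuous and strictly increasing on $t \geq u_d$, and an analogous statement holds for $v$.

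Next I would identify the root of the minimum-height cone containing $u$ and $v$ with the least $t$ for which $B_u(t) \cap B_v(t) \neq \emptyset$. Because $B_u(t)$ and $B_v(t)$ are Euclidean balls centered at $u_{:-1}$ and $v_{:-1}$, this intersection is nonempty precisely when $\rho_u(t) + \rho_v(t) \geq \|u_{:-1} - v_{:-1}\|$. By continuity and strict monotonicity of $\rho_u + \rho_v$, there is a unique critical height $t^*$ at which equality holds; at this height the two balls are externally tangent and meet at the single point on the segment $[u_{:-1}, v_{:-1}]$ determined by the radii. Writing $c^* = (c^*_{:-1}, t^*)$ for the root of the minimum-height cone, we conclude $c^*_{:-1} \in [u_{:-1}, v_{:-1}] \subset P$, so $c^* \in P$, as required.

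The main obstacle is the rotational-symmetry step in the penumbral case, since the penumbral cone is the region bounded by an envelope of $\mathcal{S}$-tangent geodesic semicircles rather than a Euclidean cone. One must verify that rotations about the vertical through $u$ permute these tangent geodesics and hence preserve $\operatorname{cone}(c)$ setwise; this follows from the cone's defining property together with the fact that isometries fixing $\mathcal{S}$ send tangent geodesics to tangent geodesics, but deserves explicit mention. A fallback, if one prefers to avoid the abstract argument, is to compute $\rho_u(t)$ directly from the 2D penumbral geometry in the appendix and observe that the resulting expression depends on $c$ only through $c_d$ and $\|c_{:-1} - u_{:-1}\|$, giving the ball structure of $B_u(t)$ by inspection.
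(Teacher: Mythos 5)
Your argument is correct and reaches the theorem by a genuinely different reduction than the paper's. The paper works with the full $d$-dimensional ancestor regions $\mathcal{C}_u, \mathcal{C}_v$: it uses axial symmetry of each about the vertical line through its point, deduces reflection symmetry of $\mathcal{C}_u \cap \mathcal{C}_v$ across the plane $P$ through $u$, $v$, and the ideal point of $\mathcal{S}$, and then argues on each plane orthogonal to $P$ that the lowest admissible root is the point of $\mathcal{B}_{u\cap v}$ closest to the axis, which forces it onto $P$. You instead slice the ancestor sets horizontally: since $\mathcal{S}$ is invariant under Euclidean rotations about any vertical axis, each height-$t$ slice is a Euclidean ball centered at $u_{:-1}$ (resp.\ $v_{:-1}$) with radius $\rho_u(t)$ continuous and strictly increasing, so the minimum-height common ancestor occurs at the first height where two concentric growing balls meet, and the tangency point lies on the segment $[u_{:-1},v_{:-1}]\subset P$. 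This buys a cleaner and more quantitative proof: solving $\rho_u(t)+\rho_v(t)=\|u_{:-1}-v_{:-1}\|$ with $\rho_u(t)=\sqrt{r^2-u_d^2}-\sqrt{r^2-t^2}$ (penumbral) or $\rho_u(t)=(t-u_d)\sinh(r)$ (umbral) reproduces exactly the height formulas derived in the appendix, whereas the paper's route avoids computing radii at the cost of the more delicate boundary/closest-point reasoning. Two points you should make explicit when writing this up: (i) rotational symmetry alone does not give the ball structure of a slice (it permits annuli), so you need the radial monotonicity you mention in your fallback, i.e.\ that membership of $c$ in the ancestor set depends only on $c_d$ and $\|c_{:-1}-u_{:-1}\|$, monotonically in the latter, which the two-dimensional computation gives directly; and (ii) the tangency/equality height need not exist in two edge cases, namely when $u\preceq v$ or $v\preceq u$, where the minimum is attained at $t=\max(u_d,v_d)$ with root equal to the higher of the two points (trivially on $P$, and matching the $\max(u_d,v_d,\cdot)$ in the paper's formulas), and in the penumbral setting when no cone below the light source contains both points, where the statement is vacuous.
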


\begin{proof}
We prove the umbral and penumbral cases separately.

\textit{Penumbral Cones:}
Consider the region of points $x$ s.t. $x \prec u$.
This is the region of geodesics through $u$ that intersect $\mathcal{S}$ and is axially symmetric around the vertical Euclidean line $A_u$ through $u$ \cite{yu2023shadow}.
Denote this region $\mathcal{C}_u$ and its boundary $\mathcal{B}_u$.
Note that the ideal point of $\mathcal{S}$, $x_d = \infty$, is the intersection of vertical line geodesics, so all planes through $u$ and $x_d = \infty$ contain $A_u$.
Since $\mathcal{C}_u$ is axially symmetric w/r.t. $A_u$, it follows that it is reflection-symmetric across such planes.

Now, consider the intersection of $\mathcal{C}_u$ and $\mathcal{C}_v$, which has boundary $\mathcal{B}_{u\cap v}$.
$\mathcal{C}_u \cap \mathcal{C}_v$ is clearly reflection-symmetric along the plane $P$ containing $u$, $v$, and $x_d =\infty$.
As such, for any plane $P'$ orthogonal to $P$, all points on $\mathcal{B}_{u \cap d} \cap P'$ are either on $\mathcal{B}_u$ or $\mathcal{B}_v$ but not both.
Since the geodesics that form $B_u$ are monotonically increasing in height in the direction away from $A_u$ (and likewise for $v$), the minimum height cone that contains $u$ and $v$ must have root in $\mathcal{B}_{u \cap v}$.
Furthermore, the root of minimum height cone on $\mathcal{B}_{u \cap v} \cap P'$ is the closest point on $\mathcal{B}_{u \cap v} \cap P'$ to $A_u$.
Since the set of closest points on a plane to a parallel line is the intersection of the orthogonal plane through the line, the minimum height cone on $\mathcal{B}_{u \cap v} \cap P'$ is on $P$.
Thus, the minimum height cone that contains $u$ and $v$, which is the minimum over $P'$ of minimum height cones on $\mathcal{B}_{u \cap d} \cap P'$, is on $P$.

\textit{Umbral Cones:}
The proof for the umbral construction is identical to the penumbral construction, except that $\mathcal{S}$ is a point at $x_d = \infty$.

\end{proof}
\subsection{Implementation Details}

All experiments were run on a shared GPU cluster with various machine configurations.
All GPUs in the cluster were NVIDIA Volta or newer cards, and all machines had Intel Skylake or newer CPUs or AMD Milan or newer CPUs.
Most experiments used PyTorch 2.0 with \texttt{torch.compile} and TF32 turned on for matrix multiplications.
We provide PyTorch implementations of the infinite-setting penumbral and umbral attention operators at \url{https://github.com/tsengalb99/coneheads}.
Below, we give implementation details for each tested model.
For penumbral attention, we set the height of $\mathcal{S}$ to $h = 1$.
For umbral attention, we set the radius of the ball around each point to $r = 0.1$

\textbf{Graph Attention Networks.}
We use the Pytorch-GAT repository (\url{https://github.com/gordicaleksa/pytorch-GAT}) for our experiments.
In this repository, we modified the \texttt{//models/definitions/GAT.py} file to implement various attention mechanisms.
We use the provided training commands and hyperparameters to train models.
We experimented with different hyperparameters, which did not have a significant effect on the final results.
We report the default attention results from the original GAT paper.

\textbf{Neural Machine Translation (NMT) Transformers.}
For NMT experiments, we used the fairseq repository available at \url{https://github.com/facebookresearch/fairseq}.
We primarily modified \texttt{//fairseq/modules/multihead\_attention.py}.
We used the commands provided at \url{https://github.com/facebookresearch/fairseq/blob/main/examples/translation/README.md} to download and preprocess the IWSLT'14 De-En dataset and to train models.

\textbf{Vision Transformers.}
For DeiT-Ti experiments, we use the Facebook DeiT repository available at \url{https://github.com/facebookresearch/deit/blob/main/README_deit.md}.
We observed that all methods performed better at 500 epochs than at 300 epochs, and we report our experimental results for 300 and 500 epochs.
The DeiT repository relies on the Hugging Face timm repository, which is available at \url{https://huggingface.co/timm}.
We primarily modify \texttt{//models/vision\_transformer.py} in the timm.

\textbf{Adaptive Input Representations for Transformers.}
For adaptive inputs experiments, we use the same fairseq repository as the NMT experiments.
The \texttt{transformer\_lm\_wiki103} architecture in the fairseq repository uses 8 attention heads per module, which does not match 16 attention heads per module in in \cite{baevski2018adaptive}.
Thus, our results and those from other papers that use this repository, such as \cite{zheng2023efficient}, are not directly comparable to the results presented in \cite{baevski2018adaptive}.

\textbf{Diffusion Transformers.}
For DiT-B/4 experiments, we use the code and training instructions available at \url{https://github.com/facebookresearch/DiT}.
We use the Pytorch version of the codebase, and train with seed 42 to match the experiments presented in \cite{peebles2022scalable} and the repository.
The DiT codebase also uses timm, so we make the same modifications as in DeiT.
We report the DiT-B/4 seed 42 PyTorch result for dot product attention from the DiT Github repository.

\subsection{Runtime Comparison}
\label{sec:runtime}

Table \ref{tbl:speed} shows the training speed of the 4 tested transformers in iterations per second.
Like dot product attention, cone attention requres $O(n^2d)$ operations.
However, cone attention requires more operations, since dot product attention can be computed with a batch matrix multiply.
Inside a transformer, cone attention generally results in a 10-20\% performance decrease when implemented with \texttt{torch.compile}.
However, \texttt{torch.compile} is not perfect, and an optimized raw CUDA implementation would likely narrow the speed gap between dot product and cone attention.
\texttt{torch.compile} was not used for the NMT transformer since the training speed was sufficiently fast and, as of writing, \texttt{torch.cdist} has issues with dynamic-shaped tensors. 
We expect this to be fixed in future PyTorch releases and performance to be similar to the Adaptive Inputs transformer.

\begin{table}[]
\centering
\caption{Training speed in iterations per second of various attention models across the 4 tested transformers. When used in transformers, cone attention is slightly slower than dot product attention. As mentioned in the main text, \texttt{torch.compile} is not optimal at fusing operations, and the performance gap can likely be narrowed with a custom CUDA implementation. \texttt{torch.compile} was not used for the NMT transformer since the training speed was sufficiently fast and, as of writing, \texttt{torch.cdist} has issues with dynamic-shaped tensors. We expect this to be fixed in future PyTorch releases and the performance to be similar to the Adaptive Inputs transformer. All numbers were obtained on a single NVIDIA Tesla V100 SXM2 GPU.}
\begin{tabular}{@{}lcccc@{}}
\toprule
Method                  & \begin{tabular}[c]{@{}c@{}}NMT\\ (4096 tokens/it)\end{tabular} & \begin{tabular}[c]{@{}c@{}}Adaptive Inputs\\ (4096 tokens/it)\end{tabular} & \begin{tabular}[c]{@{}c@{}}DeiT-Ti\\ (bs=256)\end{tabular} & \begin{tabular}[c]{@{}c@{}}DiT-B/4\\ (bs=64)\end{tabular} \\ \midrule
Dot Product             & 9.09                                                 & 0.526                                                            & 62.9                                                     & 1.09                                                     \\
Penumbral               & 7.10 (-21.9\%)                                       & 0.490 (-6.86\%)                                                  & 51.3 (-18.6\%)                                           & 1.08 (-0.92\%)                                           \\
Umbral                  & 7.48 (-17.7\%)                                       & 0.488 (-7.32\%)                                                  & 56.0 (-11.0\%)                                           & 1.07 (-1.83\%)                                           \\
\texttt{torch.compile}? & No                                                   & Yes                                                              & Yes                                                      & Yes                                                      \\ \bottomrule
\end{tabular}
\label{tbl:speed}
\end{table}

\subsection{$\alpha$-approximate rank}

Here, we discuss an interesting and potentially useful connection between the problem of encoding hierarchies in dot product attention and the $\alpha$-approximate rank problem.
This connection gives us some intuition as why larger models with large token embedding dimensions perform well, even with hierarchical data.
We note that we have not extensively studied this connection and how it relates to training dynamics, such as in the context of learning attention with gradient based optimizers, and leave that for future work.

Consider the following formulation of encoding a partial ordering in attention:

\begin{equation}
K(x, y) = \exp(\gamma P(x, y)) \qquad 
P(x, y) \in
\begin{cases}
    [1, \alpha] & \text{if } x \preceq y\\
    [-\alpha, -1] & \text{if } x \npreceq y\\
\end{cases}
\label{eqn:hierarchyattn}
\end{equation}

where $1 \le \alpha \le \infty$.
Essentially, for a set of keys and a single query, the attention matrix should give higher ``weight'' to keys who are descendants of the query, which is similar to cone attention.
If $\alpha$ is close to 1, then this gives a tighter margin on the attention matrix, which gives a ``higher quality'' attention after softmax normalization.
We wish to characterize the number of dimensions $d$ needed in dot product attention to encode a partial ordering with equation \ref{eqn:hierarchyattn}.
Now, consider the $\alpha$-approximate rank of a sign matrix $A$:

\begin{definition}
\textbf{$\alpha$-approximate rank:}
For a sign matrix $A \in \{-1, +1\}^{n \times n}$, the $\alpha$-approximate rank is defined as the minimum rank of a matrix $A' \in \mathbb{R}^{n \times n}$ such that $J \le A \circ A' \le \alpha J$, where $J$ is the all one's matrix, $\alpha \ge 1$, and $\circ$ is the elementwise product.
\end{definition}

In dot product attention, $P = qk^\top$, where $q, k \in \mathbb{R}^{n \times d}$.
Note that $P$ has rank $d$.
Furthermore, if the partial ordering is encoded in a sign matrix $S$ s.t. $S_{ij} = +1$ if $i \preceq j$ and $-1$ otherwise, finding the minimum $d$ s.t. $\exists P$ that satisfies equation \ref{eqn:hierarchyattn} reduces to finding the $\alpha$-approximate rank of $S$.
A number of works have focused on characterizing the $\alpha$-approximate rank of arbitrary sign matrices.
Below, we summarize some results from \cite{approxrank} and \cite{lee2009approx}.
\citet{lee2009approx} give the following bounds on the $\alpha$-approximate rank of a $n \times m$ sign matrix $A$, denoted $\ark(A)$.

\begin{equation}
\frac{1}{\alpha^2}\gamma_2^{\alpha}(A)^2 \le \ark(A) \le \frac{8192\alpha^6}{(\alpha - 1)^6}\ln^3(4mn)\gamma_2^{\alpha}(A)^6
\label{eqn:bigeqn}
\end{equation}

where $\gamma_2^\alpha(A)$ is defined as

\begin{equation}
\gamma_2^\alpha(A) = \min_{A': J \le A \circ A' \le \alpha J} \gamma_2(A')
\end{equation}

and $\gamma_2(A)$ is defined as 

\begin{equation}
\gamma_2(A) = \max_{u, v: \|u\| = \|v\| = 1} \|A \circ vu^\top\|_{tr},
\end{equation}

and

\begin{equation}
\mbox{rk}_{\frac{\alpha + t}{1 - t}}(A) \le \frac{4 \gamma_2^\alpha(A)^2 \ln(4mn)}{t^2}
\label{eqn:lbound}
\end{equation}

for $0 < t < 1$.
These bounds depend on $\gamma_2^{\alpha}$, which, to the best of our knowledge, has not been well characterized for partial ordering matrices.
However, $\gamma_2^{\alpha}$ can be computed in polynomial time with a semidefinite program, which may be useful \cite{lee2009approx}.
Equation \ref{eqn:bigeqn} gives some indication of how $\alpha$ changes with $d = \ark{}$.
From equation \ref{eqn:bigeqn}

\begin{equation}
\alpha \le \frac{1}{1 - \sqrt[6]{\frac{8192\ln^3(4n^2)\gamma_2^\alpha(S)^6}{d}}}
\end{equation}

For fixed $S$, as $d$ increases, this upper bound on the achievable $\alpha$ margin decreases with $O(1/(1-\sqrt[6]{1/d}))$.
Finally, if we go in the other direction and set $\alpha = \infty$, finding $d$ reduces to finding the sign-rank of $S$.
From \cite{alon2016sign}, the sign-rank of $S$ is bounded by $SC(S) + 1$, where $SC$ is the minimum over all column permutations of $S$ of the maximum number of sign changes in a row of $S$.
If $S$ encodes a hierarchy, and the nodes of that hierarchy are numbered depth first in $S$, then $SC(S) \le 2$, and so the sign-rank is at most 3.

\end{document}